\documentclass[sn-mathphys-num]{sn-jnl}


\usepackage{graphicx}%
\usepackage{multirow}%
\usepackage{amsmath,amssymb,amsfonts,bm}%
\usepackage{amsthm}%
\usepackage{mathrsfs}%
\usepackage[title]{appendix}%
\usepackage{xcolor}%
\usepackage{textcomp}%
\usepackage{manyfoot}%
\usepackage{booktabs}%
\usepackage{algorithm}%
\usepackage{algorithmicx}%
\usepackage{algpseudocode}%
\usepackage{listings}%


\theoremstyle{thmstyleone}%
\newtheorem{theorem}{Theorem}
%

\theoremstyle{thmstyletwo}%
\newtheorem{remark}{Remark}%

\theoremstyle{thmstylethree}%
\newtheorem{definition}{Definition}%

\raggedbottom

\begin{document}

\title[Article Title]{Stagnation in Evolutionary Algorithms: Convergence $\neq$ Optimality}

\author*[1]{\fnm{Xiaojun} \sur{Zhou}}\email{michael.x.zhou@csu.edu.cn}

\affil*[1]{\orgdiv{School of Automation}, \orgname{Central South University}, \orgaddress{\city{Changsha}, \postcode{410083}, \country{China}}}


\abstract{In the evolutionary computation community, it is widely believed that stagnation impedes convergence in evolutionary algorithms, and that convergence inherently indicates optimality. However, this perspective is misleading. In this study,  it is the first to highlight that the stagnation of an individual can actually facilitate the convergence of the entire population, and convergence does not necessarily imply optimality, not even local optimality. Convergence alone is insufficient to ensure the effectiveness of evolutionary algorithms. Several counterexamples are provided to illustrate this argument.}

\keywords{Evolutionary Algorithm, Convergence, Stagnation, Counterexample}



\maketitle

\section{Introduction}
Stagnation refers to the situation where the best solution found so far remains unchanged over time, which is a common phenomenon in evolutionary computation, as most evolutionary algorithms are stochastic \cite{bonyadi2015stability,doerr2023stagnation}. When stagnation occurs, it is often blamed on bad luck, with the assumption that the evolutionary algorithm has become stuck in a local minimum. As a result, significant efforts have been dedicated to designing new strategies to help existing algorithms escape such traps, or to conducting stability analysis of evolutionary algorithms to ensure convergence.
This leads to the proposition that stagnation impedes convergence, and that convergence inherently signifies optimality.

However, after a thorough  analysis of stagnation, convergence and optimality in this study, it is found that this perspective is misleading.
The main contributions of this study can be summarized as follows:
\begin{enumerate}
  \item This study is the first to highlight that the stagnation of an individual can actually facilitate the convergence of the entire population.
  \item This study is the first to illustrate that convergence does not necessarily imply optimality.
  \item Some counterexamples are provided to demonstrate that existing evolutionary algorithms may converge to non-optimal points.
\end{enumerate}

The remainder of this paper is organized as follows. Section II analyzes convergence and stagnation phenomena in a nominal evolutionary optimizer. Section III examines the optimality properties of evolutionary algorithms. Section IV then provides several counterexamples to demonstrate that convergence does not inherently guarantee optimality. Finally, Section V concludes the paper with key findings and implications.

\section{Analysis of convergence and stagnation}
In this study, the following unconstrained optimization problem is studied:
\begin{eqnarray}
\min_{\bm x \in \Omega} f(\bm x)
\end{eqnarray}
where $\bm x \in \Omega \subseteq \mathcal{R}^n$,  $\Omega$ is a closed and compact set, and $f(\cdot)$ is bounded below.
It is assumed that the optimal solution $\bm x^{*}$ does not lie on the boundary of $\Omega$.

In this section, we consider a population of $N$ individuals and construct a nominal evolutionary optimizer governed by the following dynamics
\begin{equation}
\label{eq:neodynamics}
\bm x_{i} (k+1) = \bm x_{i}(k) + \alpha (\bm x_{j}(k) - \bm x_{i} (k))
\end{equation}
where $\bm{x}_i \in \mathbb{R}^n$ denotes the $i$th individual, $\alpha$ is a parameter, and the $j$th individual ($\bm{x}_j$, with $j \neq i$) can be considered a neighbor of the $i$th individual.

\begin{theorem}
Considering a population with any two individuals $i$ and $j$ with the dynamics described by Eq. (2), if the parameter  $\alpha$  satisfies $0 < \alpha < 1$, the population will converge, \textit{i.e.} $\bm x_{i} (k) = \bm x_{j} (k), k \rightarrow \infty$.
\end{theorem}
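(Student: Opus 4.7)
The plan is to track the evolution of the difference vector $\bm d(k) = \bm x_i(k) - \bm x_j(k)$ under the update rule (2) and show that its norm contracts at each step whenever $0 < \alpha < 1$, from which convergence $\|\bm d(k)\| \to 0$ follows immediately. Since the dynamics in (2) describe what happens when an individual is updated using a neighbor, I would first enumerate the possible update scenarios at step $k$: either only $i$ updates toward $j$, only $j$ updates toward $i$, both update simultaneously toward each other, or neither updates (stagnation of the pair).

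First, I would handle the asymmetric case (which connects directly to the paper's thesis about stagnation). If $j$ stagnates at step $k$ and only $i$ moves, then a direct substitution gives $\bm x_i(k+1) - \bm x_j(k+1) = (1-\alpha)(\bm x_i(k) - \bm x_j(k))$, so $\|\bm d(k+1)\| = (1-\alpha)\|\bm d(k)\|$. The symmetric case (only $j$ updates) yields the same contraction factor $1-\alpha$. I would emphasize here that the stagnation of one individual is exactly what makes the other contract geometrically toward it, foreshadowing the paper's main message. For the simultaneous-update case, substitution gives $\bm d(k+1) = (1-2\alpha)\bm d(k)$, so $\|\bm d(k+1)\| = |1-2\alpha|\|\bm d(k)\|$, and $0 < \alpha < 1$ yields $|1-2\alpha| < 1$. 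In the no-update case the norm is simply preserved.

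Combining the cases, there exists a contraction constant $\rho = \max\{1-\alpha, |1-2\alpha|\} < 1$ such that $\|\bm d(k+1)\| \le \rho \|\bm d(k)\|$ at every step where some update occurs. Iterating gives $\|\bm d(k)\| \le \rho^{k}\|\bm d(0)\|$ (or, if one wants to be cautious about intervals of pure stagnation, $\|\bm d(k)\| \le \rho^{m(k)}\|\bm d(0)\|$ where $m(k)$ counts updates). Taking $k \to \infty$ yields $\|\bm x_i(k) - \bm x_j(k)\| \to 0$, which is the stated convergence.

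The main obstacle I anticipate is not the algebra but the interpretation: the theorem statement does not explicitly specify the update schedule between $i$ and $j$, so the argument has to be framed to cover every admissible scenario uniformly. Once the case analysis is laid out, the contraction step itself is routine. A secondary subtlety is that pure stagnation (neither individual updating for a stretch) does not by itself contract the distance, so if one wants a clean geometric bound one should either assume that updates occur infinitely often in the pair or simply conclude in the weaker form $\|\bm d(k)\| \to 0$ as $k \to \infty$ via the nonincreasing-plus-strict-contraction structure.
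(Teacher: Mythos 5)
Your proposal is correct and its core is exactly the paper's argument: form the difference $e_{ij}(k)=\bm x_i(k)-\bm x_j(k)$, substitute the simultaneous updates to get $e_{ij}(k+1)=(1-2\alpha)e_{ij}(k)$, and conclude from $|1-2\alpha|<1$ for $0<\alpha<1$. The additional case analysis over update schedules (one-sided updates giving factor $1-\alpha$, and the caveat about pure stagnation preserving the distance) is a reasonable tightening that the paper simply sidesteps by implicitly assuming both individuals update toward each other at every step.
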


\begin{proof}
We begin by defining the state error between individuals \( i \) and \( j \) at time step \( k \) as:
\[
e_{ij}(k) = \bm{x}_i(k) - \bm{x}_j(k),
\]
which quantifies the deviation between their states.

Then, we consider the error at the next time step:
\[
e_{ij}(k+1) = \bm{x}_i(k+1) - \bm{x}_j(k+1).
\]
Using the given update equation Eq. (\ref{eq:neodynamics})  for both individuals:
\begin{align*}
\bm{x}_i(k+1) &= \bm{x}_i(k) + \alpha (\bm{x}_j(k) - \bm{x}_i(k)), \\
\bm{x}_j(k+1) &= \bm{x}_j(k) + \alpha (\bm{x}_i(k) - \bm{x}_j(k)),
\end{align*}
we obtain:
\[
e_{ij}(k+1) = \left[ \bm{x}_i(k) + \alpha (\bm{x}_j(k) - \bm{x}_i(k)) \right] - \left[ \bm{x}_j(k) + \alpha (\bm{x}_i(k) - \bm{x}_j(k)) \right].
\]
Simplifying this expression yields the error dynamics:
\[
e_{ij}(k+1) = (1 - 2\alpha) e_{ij}(k).
\]

This represents a linear dynamical system for the error, with guaranteed convergence when \( |1 - 2\alpha| < 1 \). Solving this inequality gives the stability condition:
\[
0 < \alpha < 1.
\]
Under this condition, the error dynamics are asymptotically stable, ensuring convergence in the population.
\end{proof}

\begin{remark}
This convergence behavior emerges intrinsically from the population dynamics,
without requiring any external control inputs or intervention. Although it only involves two individuals, it can be extended to larger populations.
\end{remark}

On the other hand, the phenomenon of stagnation is a frequently discussed topic in the evolutionary computation community, and it is widely believed that stagnation impedes convergence.
To clarify this point, we assume that the state of individual $j$ has stagnated, \textit{i.e.}, $\bm x_j(k) \equiv \bm x_j$, and then we have
\begin{equation}
\begin{aligned}
\bm x_{i} (k+1) - \bm x_{j}&= (1- \alpha) ( \bm x_{i} (k) - \bm x_j) \\
\end{aligned}
\end{equation}
This system achieves asymptotic convergence when \( |1 - \alpha| < 1 \). Solving this inequality yields a new stability condition:
\[
0 < \alpha < 2.
\]

\begin{remark}
As demonstrated in previous analysis, it is not difficult to find that if an individual has stagnated, the system exhibits relaxed convergence conditions since the admissible range of $\alpha$ is broader. This observation reveals an important characteristic of the dynamics: stagnation of one individual facilitates convergence of the entire population.
\end{remark}

Next, we present another type of convergence under external control, \textit{i.e.}, the acceptance criterion in optimization algorithms.
\begin{theorem}
Let $\bm {Best}(k)$ denote the best solution found so far for the nominal evolutionary optimizer, if the following acceptance criterion is adopted
\begin{equation}
f(\bm {Best}(k+1)) \leq f(\bm {Best}(k))
\end{equation}
then the sequence $\{f(\bm{Best}(k))\}_{k=0}^{\infty}$ will converge.
\end{theorem}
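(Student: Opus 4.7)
The plan is to reduce the statement to a direct application of the monotone convergence theorem for real sequences. The acceptance criterion \(f(\bm{Best}(k+1)) \leq f(\bm{Best}(k))\) asserts, essentially by definition, that \(\{f(\bm{Best}(k))\}_{k=0}^{\infty}\) is a monotonically non-increasing sequence of real numbers, so the first step is simply to record this observation.

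Next I would invoke the standing assumption from the problem setup that \(f(\cdot)\) is bounded below on \(\Omega\). Combined with the fact that each \(\bm{Best}(k) \in \Omega\), this yields a uniform lower bound \(L\) such that \(f(\bm{Best}(k)) \geq L\) for all \(k\). At that point the conclusion is immediate: a monotonically non-increasing real sequence that is bounded below converges to its infimum, so
\[
\lim_{k \to \infty} f(\bm{Best}(k)) = \inf_{k \geq 0} f(\bm{Best}(k)) \;\geq\; L.
\]

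Honestly, there is no real obstacle in this proof — the result is a one-line consequence of the monotone convergence theorem once the two ingredients (monotonicity from the acceptance rule, boundedness from the problem setup) are lined up. The only subtle point worth flagging, and the one that drives the paper's thesis, is that convergence of the scalar sequence \(\{f(\bm{Best}(k))\}\) says nothing about the iterates \(\bm{Best}(k)\) themselves converging to a minimizer, nor even to a stationary point. I would therefore close the proof with a short remark emphasizing that the limit value \(\inf_k f(\bm{Best}(k))\) need not equal \(f(\bm{x}^{*})\), which sets up the counterexamples promised for Section IV.
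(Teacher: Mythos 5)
Your proposal is correct and follows essentially the same route as the paper: the acceptance criterion gives monotone non-increase, the standing assumption that $f$ is bounded below on $\Omega$ gives the lower bound, and the Monotone Convergence Theorem yields convergence. Your closing observation that convergence of $\{f(\bm{Best}(k))\}$ says nothing about optimality is exactly the point the paper makes in the remark immediately following this theorem.
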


\begin{proof}
Since the sequence ${f(\bm{Best}(k))}_{k=0}^{\infty}$ is monotonically decreasing and the function $f(\cdot)$ is bounded below, by the Monotone Convergence Theorem, the sequence ${f(\bm{Best}(k))}_{k=0}^{\infty}$ converges.
\end{proof}

\begin{remark}
There are other types of convergence results in evolutionary computation \cite{derrac2014analyzing,he2015average,chen2021average}, and in fact, ensuring convergence is generally not challenging; however, convergence does not necessarily imply optimality, not even local optimality,
which is neglected by the majority of existing studies.
\end{remark}

In the next section, the optimality in evolutionary algorithms will be discussed.

\section{Analysis of optimality}
In the evolutionary computation community, there exists a prevalent belief that traditional evolutionary algorithms are inherently susceptible to becoming trapped in local minima.  Consequently, significant research efforts have been devoted to designing new algorithms that can escape such traps. However, the notion of local minima in this context can be misleading. A common but potentially flawed assumption is that the stagnation of the best-found solution across multiple generations necessarily signifies the presence of a local minimum. In reality, this assumption is not necessarily true.

In mathematics, a local minimum of a function \( f(\bm x) \) is defined as follows:
\begin{definition}
Let \( f: \mathcal{R}^n \to \mathcal{R} \) be a real-valued function. A point \( \bm x^{*} \in \mathcal{R}^n \) is a \textbf{local minimum} of \( f(\bm x) \) if there exists a neighborhood \( \mathcal{N}_{\epsilon}(\bm x^{*}) \) of \(\bm x^{*} \) such that for all \( \bm x \in \mathcal{N}_{\epsilon}(\bm x^{*}) \):
\[
f(\bm x^{*}) \leq f(\bm x)
\]
where $\mathcal{N}_{\epsilon}(\bm x^{*}) = \{\bm x \in \mathcal{R}^n: \|\bm x - \bm x^{*}\| \leq \epsilon\}$
\end{definition}

In the evolutionary computation community, a local minimum of a function \( f(\bm x) \) is technically characterized as follows:
\begin{definition}
Let \( f: \mathcal{R}^n \to \mathcal{R} \) be a real-valued function. A point \( \bm x^{*} \in \mathcal{R}^n \) is a \textbf{local minimum} of \( f(\bm x) \) if there does not exist a solution \(\bm x\) such that:
\[
f(\bm x) < f(\bm x^{*}), \;\;\;\; \bm x \in \mathcal{P}_{Alg}(k:k+T)
\]
where \( \mathcal{P}_{Alg}(k:k+T) \)  denotes the population sequence from $k$  to $k+T$  of the evolutionary algorithm, and $T$ is the maximum number of stagnation generations.
\end{definition}

\begin{remark}
If not carefully examined, one might mistakenly conclude that the two definitions are almost the same.
However, there is a significant distinction between them. In mathematics, the concept of a neighborhood is independent of any specific algorithm. In contrast, in the evolutionary computation community, the concept of a neighborhood is dependent on the evolutionary algorithm itself, specifically the operators used in the algorithm.
\end{remark}

In the evolutionary computation community, researchers are keen on constructing benchmark functions, regardless of their homogeneity.
However, this will lead to ``\textbf{overfitting}" in existing evolutionary algorithms since they know their own algorithms' neighborhood quite well, in other words, they are both players and referees.
\section{Experimental results and analysis}
\subsection{Benchmark Functions}
(1) Zhou1 function
\begin{align*}
f_1(\bm{x}) = &\ (x_1 - 1)^2 + \sin^2\left(10^4(x_1 - 1)^2\right) \\
&+ \sum_{i=1}^{n-1} \left[10^4 \left(x_{i+1} - 2x_i^2\right)^2
+ 10^4 \sin^2\left(10^4 \left(x_{i+1} - 2x_i^2\right)\right) \right]
\end{align*}
where the global optimum $x_1^{*} = 1, x_{i+1}^{*} = 2 (x^{*}_{i})^2$ $(1 \leq i \leq n-1)$,  and $f(\bm x^{*}) = 0$.

(2) Zhou2 function
\begin{align*}
f_2(\bm{x}) =\ & (x_1 + 1)^2 + \sin^2\left(10^4 (x_1 + 1)^2 \right)  \\
& + \sum_{i=1}^{n-1} \left[ 10^4 \left( x_{i+1}^2 + 2x_i \right)^2 + 10^4 \sin^2\left( 10^4 \left( x_{i+1}^2 + 2x_i \right)^2 \right) \right]
\end{align*}
where the global optimum $x_1^{*} = -1, x_{i+1}^{*} = -\sqrt{2 x^{*}_{i}}$ $(1 \leq i \leq n-2)$, $x_{i+1}^{*} = \pm \sqrt{2 x^{*}_{i}}$ $( i = n-1)$ and $f(\bm x^{*}) = 0$.

(3) Zhou3 function
\begin{align*}
\ &f_3(\bm{x}) = (x_1 + 1)^2 \left( 1 + \sin^2\left(10^4(x_1 + 1)^2 \right) \right) \\
& + \sum_{i=1}^{n-1} 10^4 \left( x_{i+1}^2 + 2^i x_i \right)^2
\left( 1 + 10^4 \sin^2\left( 10^4 \left( x_{i+1}^2 + 2^i x_i \right)^2 \right) \right) \\
\end{align*}
where the global optimum $x_1^{*} = -1, x_{i+1}^{*} = -\sqrt{2^i x^{*}_{i}}$ $(1 \leq i \leq n-2)$, $x_{i+1}^{*} = \pm \sqrt{2^i x^{*}_{i}}$ $( i = n-1)$ and $f(\bm x^{*}) = 0$.
\subsection{Experiment settings}
To illustrate that convergence does not necessarily guarantee optimality, this study employs the benchmark functions described above and conducts experimental validation using a set of representative evolutionary algorithms.
\begin{itemize}
  \item GL25 \cite{garcia2008global}: global and local real-coded genetic algorithm.
  \item CLPSO \cite{liang2006comprehensive}: comprehensive learning particle swarm optimizer.
  \item LSHADE \cite{tanabe2014improving}: SHADE using linear population size reduction
  \item GWO \cite{mirjalili2014grey}: grey wolf optimizer.
  \item WOA \cite{mirjalili2016whale}: whale optimization algorithm.
  \item HHO \cite{heidari2019harris}: harris hawks optimization.
\end{itemize}
In the experimental setup, all algorithms are implemented using their default parameter settings and executed with 30 independent runs. For simplicity, experiment tests are limited to $n = 3$, and the search range is $[-100,100]$, with the maximum number of stagnation generations at $T = 100, 200, 300, 500, 1000$, respectively. The optimality is evaluated by computing the average gradient norm.

\subsection{Experimental results and analysis}
The experimental results are presented in Table \ref{tab:gradnorm} and Fig. \ref{fig:f1} to Fig. \ref{fig:f3}. It is observed that, for most of the tested algorithms, the average gradient norm remains large and does not exhibit a clear downward trend as the maximum number of stall generations increases, with the exception of LSHADE. Nevertheless, even for LSHADE, the average gradient norm remains significantly high, indicating a considerable deviation from the zero-gradient condition. These findings indicate that the employed evolutionary algorithms do not guarantee convergence to an optimal solution.

\begin{table}[!htbp]
\centering
\caption{Experimental results of the average gradient norm}
\label{tab:gradnorm}
\begin{tabular}{l|c|c|c|c|c|c|c}
\hline
\toprule[1pt]
Fun & $T$ & GL25 & CLPSO & LSHADE & GWO & WOA & HHO \\
\hline
        & 100 & 2.12e+08 & 2.64e+08 & 3.15e+07 & 4.71e+07 & 6.91e+07 & 2.17e+07  \\
        & 200 & 3.07e+08 & 2.37e+08 & 1.36e+07 & 1.87e+07 & 5.87e+07 & 2.46e+07  \\
$f_{1}$& 300 & 2.36e+08 & 2.09e+08 & 4.31e+06 & 2.76e+07 & 4.28e+07 & 2.53e+07  \\
        & 500 & 4.12e+08 & 1.60e+08 & 2.22e+06 & 1.50e+07 & 1.73e+07 & 1.15e+07  \\
        & 1000 & 3.14e+08 & 1.20e+08 & 2.63e+06 & 2.62e+07 & 2.45e+07 & 8.19e+06  \\
\hline
        & 100 & 3.99e+08 & 3.10e+08 & 8.18e+05 & 3.15e+07 & 1.94e+07 & 7.76e+06  \\
        & 200 & 4.98e+08 & 2.44e+08 & 3.08e+03 & 3.70e+07 & 1.59e+07 & 2.46e+06  \\
$f_{2}$& 300 & 3.53e+08 & 1.51e+08 & 3.52e+03 & 2.85e+06 & 1.17e+07 & 3.27e+06  \\
        & 500 & 5.05e+08 & 1.49e+08 & 1.76e+03 & 1.55e+07 & 7.48e+06 & 3.28e+06  \\
        & 1000 & 5.18e+08 & 1.07e+08 & 2.53e+02 & 1.66e+07 & 4.22e+06 & 4.15e+05  \\
\hline
        & 100 & 2.08e+16 & 2.69e+13 & 1.54e+11 & 4.51e+12 & 5.08e+13 & 1.75e+10  \\
        & 200 & 9.05e+15 & 7.83e+12 & 4.79e+06 & 8.52e+10 & 8.56e+14 & 3.12e+10  \\
$f_{3}$& 300 & 7.28e+16 & 1.09e+13 & 2.35e+06 & 4.27e+10 & 9.72e+12 & 7.65e+09  \\
        & 500 & 5.86e+15 & 4.89e+12 & 1.53e+04 & 4.41e+10 & 8.54e+12 & 8.77e+02  \\
        & 1000 & 2.74e+15 & 1.50e+12 & 1.97e+02 & 2.11e+11 & 1.34e+09 & 4.35e+06  \\
\bottomrule[1pt]
\hline
\end{tabular}
\end{table}

\begin{figure}[!htbp]
\centering
\includegraphics[width=0.8\textwidth]{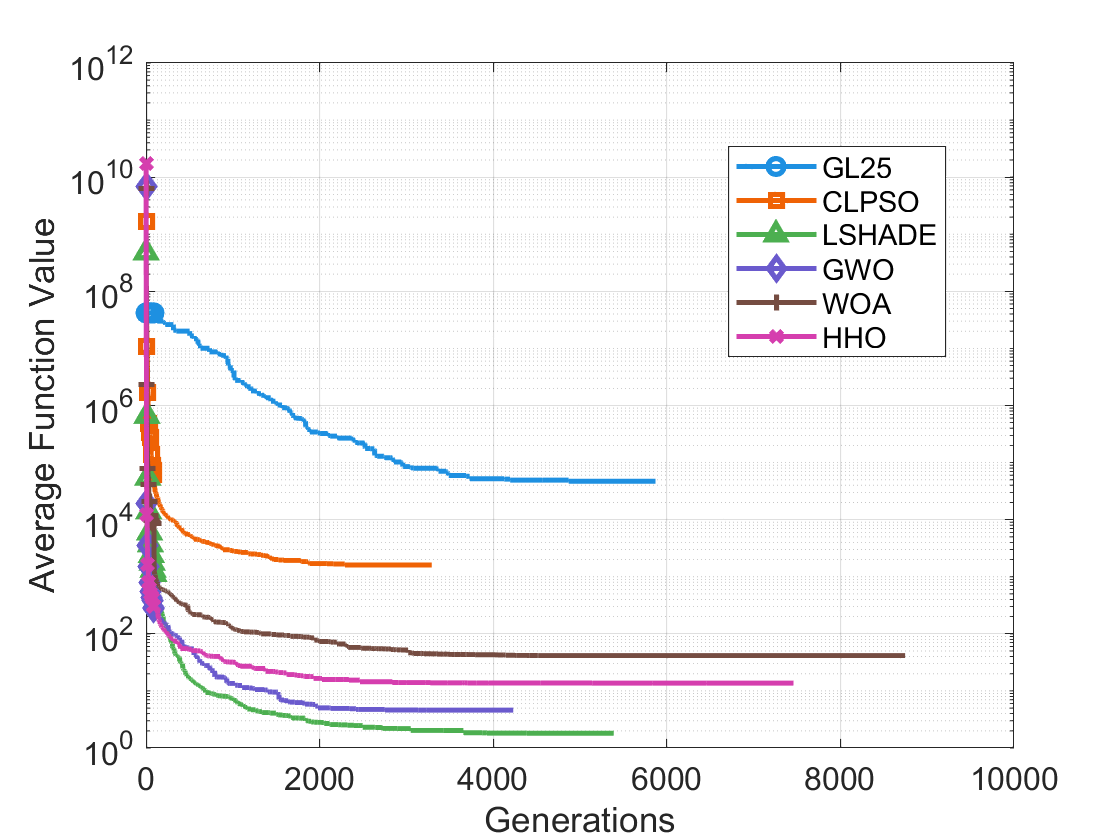}
\caption{Convergence curves for $f_1$ with $T = 1000$}
\label{fig:f1}
\end{figure}

\begin{figure}[!htbp]
\centering
\includegraphics[width=0.8\textwidth]{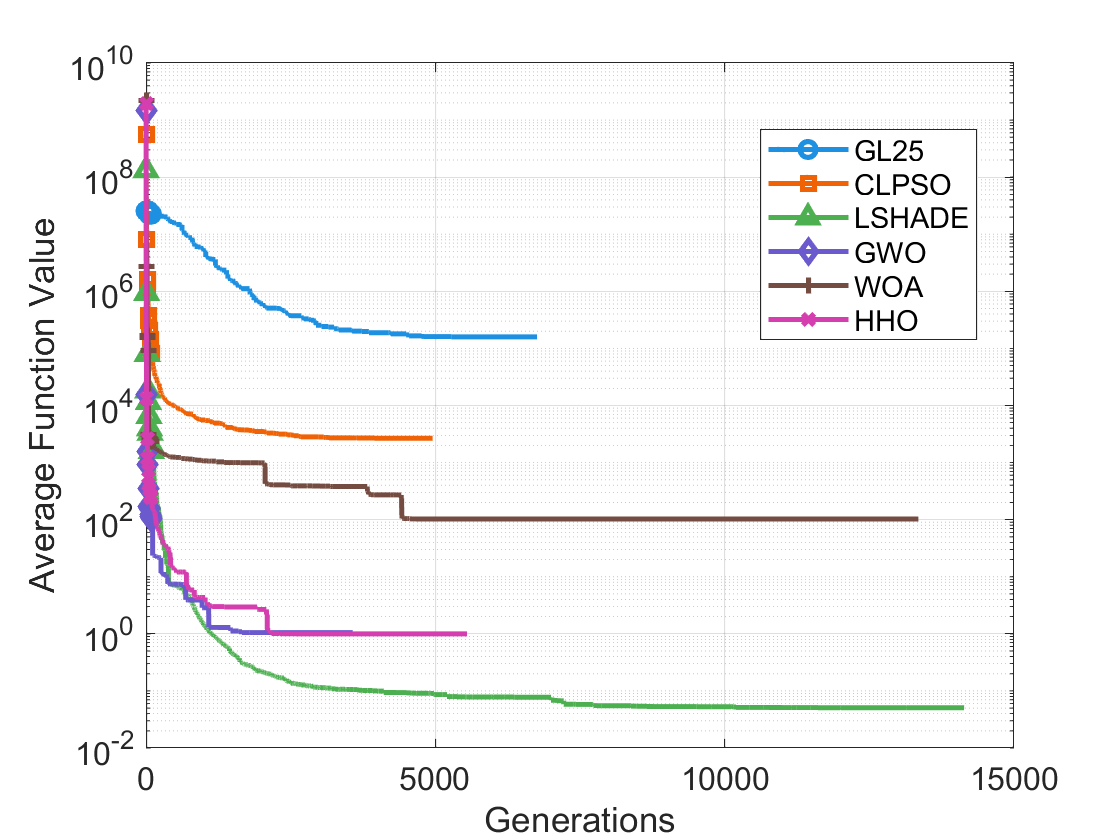}
\caption{Convergence curves for $f_2$ with $T = 1000$}
\label{fig:f2}
\end{figure}

\begin{figure}[!htbp]
\centering
\includegraphics[width=0.8\textwidth]{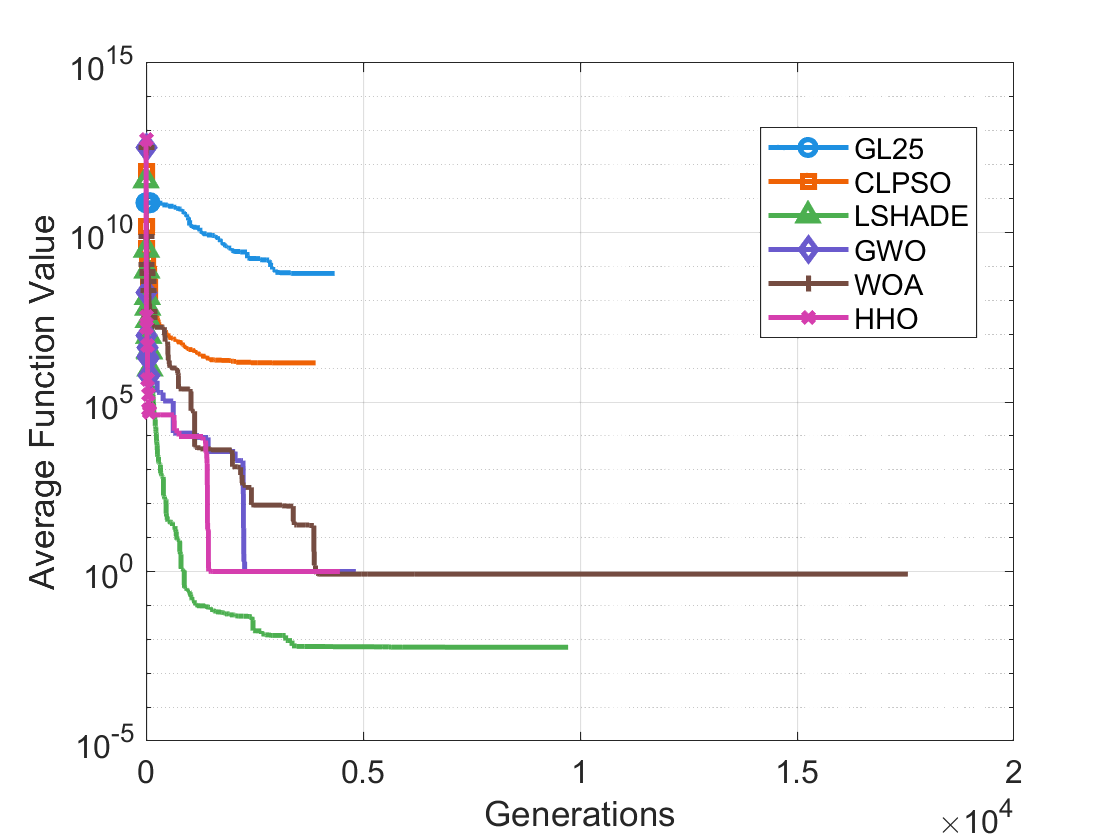}
\caption{Convergence curves for $f_3$ with $T = 1000$}
\label{fig:f3}
\end{figure}

\section{Conclusion}
This paper systematically clarifies three fundamental concepts in evolutionary computation-stagnation, convergence, and optimality-by rigorously delineating their distinctions and supporting our analysis with counterexamples. Our findings demonstrate that, for evolutionary algorithms, convergence does not inherently imply optimality, not even local optimality.

Although existing evolutionary algorithms demonstrate excellent performance on standard benchmark functions, they tend to suffer from overfitting issues. While differential evolution (DE)-like algorithms exhibit competitive results on CEC benchmark functions, this does not necessarily reflect strong global search capability.  An excessive focus on accelerating convergence without ensuring optimality can ultimately hinder optimization performance.

\section*{Acknowledgments}
This work was supported by the National Natural Science Foundation of China under Grant 62273357.


\bibliography{sn-bibliography}

\end{document}